\documentclass[a4paper, times, 10pt,twocolumn]{article}
\usepackage[top=4.9cm,bottom=3.7cm,left=1.5cm,right=1.5cm]{geometry}
\usepackage{amsmath,amsthm,amssymb,amsfonts}
\usepackage{graphicx}
\usepackage{textcomp}
\usepackage{multirow}
\usepackage{hyperref}
\usepackage{booktabs}
\usepackage{algorithm}
\usepackage{algpseudocode}
\usepackage{tabularx}
\usepackage{subcaption}
\usepackage[square,numbers]{natbib}
\usepackage{lipsum} 

\usepackage{ICMLC}
\usepackage{times}
\usepackage{graphicx}
\usepackage{indentfirst}
\usepackage{latexsym}
\usepackage[margin=8pt,font=footnotesize,labelfont=bf,labelsep=period
]{caption}

\pdfpagewidth=\paperwidth
\pdfpageheight=\paperheight
\newtheorem{lemma}{Lemma}

\begin{document}
\title{Tiny, Hardware-Independent, Compression-based Classification}  

\author{\bf{\normalsize{C. Meyers${^{1,2}}$, A. P. MacSween, E. Elmroth${^2}$, and~T. L\"{o}fstedt}${^2}$}\\ 
\\
\normalsize{$^1$Institute for Experiential AI, Northeastern University, Boston, United States}\\
\normalsize{$^2$Department of Computer Science, Umeå University, Umeå, Sweden} \\
\normalsize{E-MAIL:\@ c.meyers@cs.umu.se}\\
\\}

\maketitle \thispagestyle{empty}

\maketitle

\begin{abstract}
The recent developments in machine learning have highlighted a conflict between online platforms and their users in terms of privacy.
The importance of user privacy and the struggle for power over user data has been intensified as regulators and operators attempt to police online platforms.
As users have become increasingly aware of privacy issues, client-side data storage, management, and analysis have become a favoured approach to large-scale centralised machine learning.
However, state-of-the-art machine learning methods require vast amounts of labelled user data, making them unsuitable for models that reside client-side and only have access to a single user's data.
State-of-the-art methods are also computationally expensive, which degrades the user experience on compute-limited hardware and also reduces battery life.
A recent alternative approach has proven remarkably successful in classification tasks across a wide variety of data---using a compression-based distance measure (called normalised compression distance) to measure the distance between generic objects in classical distance-based machine learning methods.
In this work, we demonstrate that the normalised compression distance is actually not a metric; develop it for the wider context of kernel methods to allow modelling of complex data; and present techniques to improve the training time of models that use this distance measure.
We demonstrate that the normalised compression distance works as well as and sometimes better than other metrics and kernels---while requiring only marginally more computational costs and in spite of the lack of formal metric properties.
The end results is a simple model with remarkable accuracy even when trained on a very small number of samples allowing for models that are small and effective enough to run entirely on a client device using only user-supplied data.
\end{abstract}

\begin{keywords}
   {Machine Learning; Trustworthy Machine Learning; Privacy-Preserving Machine Learning; Client-Side Learning; Edge Learning; Normalised Compression Distance}
\end{keywords}


\section{Introduction}

Modern machine learning (ML) methods have demonstrated remarkable efficacy across many domains.
However, they often have large numbers of parameters, and thus also require large numbers of samples to train on~\cite{desislavov2021compute}.
This aggregation of vast amounts of data creates numerous privacy, safety, and security threats~\cite{chat_control} that are consequences of large-scale user data collection, for instance by online platform operators (see Section~\ref{threat} for more details).
We evaluate and extend prior work on compression-based distance measures by proposing to incorporate them into kernel methods, enabling efficient classification even with limited training samples.
When using small training sets and small and efficient models, they can be trained entirely on a client device without sharing private user data with anyone---allowing the model builder to circumvent many weaknesses associated with state-of-the-art methods~\cite{chat_control,power_consumption_ai,desislavov2021compute}.
We demonstrate the efficacy of the proposed approach in the context of malware detection, network intrusion detection, and spam detection.
In addition, we show that our proposed method reduces the run-time relative to the baseline by a factor of approximately 50\% while simultaneously improving the accuracy significantly.

\subsection{Threat Model}
\label{threat}

In the context of online platforms, data are collected from end-users using dubious amounts of consent~\cite{nouwens2020dark} and aggregated at massive scales~\cite{desislavov2021compute}.
Such data collection on online platforms often creates privacy, safety, and security risks~\cite{chakraborty_adversarial_2018,meyers2024massively}.
One such privacy risk is the periodic attempts by regulators to weaken encryption standards~\cite{amnesty_encryption} and create backdoors to user devices.
Platform operators and governments discuss best-practices for scanning client devices for illegal or ``offensive''
content~\cite{chat_control,apple_csam}, but existing proposals are no less risky to user privacy, safety, or security than weakening encryption~\cite{chat_control}.
Privacy experts have denounced such approaches for numerous reasons: the ease of \textit{extracting} private training data~\cite{li2021membership} or the model itself~\cite{orekondy2019knockoff,correia2018copycat}, the ability of a malicious user to induce false positives for other users when the model is trained on user data~\cite[\textit{poisoning} attacks;][]{rawat2022devil,shokri2020bypassing,souri2022sleeper}, and the triviality of simply \textit{evading} the detection mechanism~\cite{carlini_towards_2017,dohmatob_generalized_2019,hopskipjump,chakraborty_adversarial_2018,trashfire}.

Current platform solutions often rely on large-scale ML methods that are trained on vast amounts of user data and federated across devices~\cite{apple_csam}---an approach that has been criticised by privacy experts~\cite{chat_control}.
While, in some cases, personally identifiable information can be removed, applications like spam detection IP address, user name, and message contents are precisely the data we need to classify spam and shut down botnets.
Beyond the inherent privacy concerns, examples of security risks include attacks against ML systems that target a model during training~\cite{biggio_poisoning_2013}, prediction~\cite{carlini_towards_2017}, and deployment~\cite{distributed_attacks}.
Even when access to a model by an adversary is limited, it is possible to induce a misclassification~\cite{hopskipjump}, reverse engineer the model~\cite{extraction_attack}, determine the model weights~\cite{jagielski2020high}, or infer the class-membership of new samples~\cite{bentley2020quantifying}.
This raises profound questions for safety-critical systems~\cite{meyers} and legal questions about access and control of the underlying data~\cite{marks2023ai}.
Even if the attacker only has access to a typical application programming interface (API), there are reliable ways to fool the model~\cite{hopskipjump}.

Many privacy experts warn of the potential for any hypothetical \textit{centralised} content-censoring system not only because of the potential failures due to adversaries mentioned above, but also because of the potential these systems to be used for mass surveillance and censorship~\cite{chat_control}.
In short, distributed and centralised training paradigms are both inherently fragile to malicious users and dangerous for user privacy, even if the resulting model lives on the user device (rather than in the cloud).

\subsection{Motivations}

In contrast to many state-of-the-art methods, this work proposes a light-weight, client-side approach to content filtering that does not rely on large-scale data collection.

Recently, Jiang \textit{et al.}~\cite{jiang2022less} proposed a remarkably successful approach to ``parameter free'' classification, dubbed NCD-KNN, that exploits a compress\-ion-based distance measure, the \textit{normalised compression distance}~\citep[NCD;][]{ncd}, to classify objects using the $k$-nearest neighbours (KNN) method~\cite{shalev2014understanding}.
NCD-KNN is known to work well even when trained on small numbers of samples~\cite{scilipoti2024strong}.
By building a model that is accurate on a small number of samples, we can fulfil the goal of training a ML model entirely on a client device using data generated by a single user.
An additional goal of this work was to evaluate the efficacy of NCD in general and to extend it to kernel methods in particular.

While other research examined topics like image classification~\cite{opitz2023gzip}, molecular property classification~\cite{weinreich2023parameter}, and text classification~\cite{nishida2011tweet}, the ability of NCD to classify datasets that contain strings, numeric values, and categorical data (heterogenous datasets) has remained unexplored.

Additionally, the original NCD work~\cite{ncd} included an error term that is usually ignored in recent research~\cite{opitz2023gzip,weinreich2023parameter,nishida2011tweet,jiang2022less}.

The original authors~\cite{ncd} asserted that NCD is always positive when using perfect compressors.
However, this is clearly demonstrated to be false in Lemma~\ref{pseudometric} when using imperfect compressors (which is what we will have in practice)~\cite{ncd_pitfalls}.

Prior works about NCD have primarily focused on distance-based ML methods (\textit{e.g.}, KNN), which limits both the class of methods that can be considered and the types of data that can be effectively analysed.
A key motivation for this work was to extend the use of NCD beyond distance-based methods by developing a novel kernel-based formulation. This significantly broadens the applicability of NCD, making it suitable for a wider range of machine learning techniques where traditional distance-based approaches cannot be used.

\subsection{Contributions}

To use NCD, the model builder must choose a compression algorithm.
While the effect of various compressors has, in part, been explored before~\cite{ncd_pitfalls}, we expand the analysis by Cebri\'{a}n \textit{et al.}~\cite{ncd_pitfalls} to more recent compression algorithms and also offer additional run-time improvements over previous implementations~\cite{jiang2022less}.

The NCD-KNN method has shown very strong performance across several benchmarks, but prior implementations~\cite{jiang2022less} are not appropriate for real-time settings due to redundant computations.
We therefore propose several modifications in Section~\ref{improvements}.

Further, we show that NCD is not a metric~\cite{opitz2023gzip,weinreich2023parameter,nishida2011tweet,jiang2022less,ncd_pitfalls}, which means that applying ML methods blindly can lead to erroneous results (\textit{e.g.}, by incorrectly ordering the nearest neighbours).
In this work we demonstrate this non-metric behaviour in Lemma~\ref{pseudometric} and propose techniques to mitigate the effects of this behaviour in Section~\ref{improvements}, effectively making the modified NCD ``more like a metric''.

Additionally, we expand the notion of NCD~\cite{opitz2023gzip,weinreich2023parameter,nishida2011tweet,ncd,jiang2022less} to kernels (Section~\ref{kernels}) thus allowing for this method to be used with other models besides KNN\@.
We thus extend NCD to reproducing kernel Hilbert spaces and hence more elaborate ML methods---allowing its use in a broader set of ML methods and to model more complex decision boundaries.

\section{Background}

In the sections below, we define and describe the NCD, outline the NCD-KNN method proposed by Jiang \textit{et al.}~\cite{jiang2022less}, outline several other string metrics, and discuss the NCD distance matrix and how to efficiently compute it.

\subsection{Normalised Compression Distance}
\label{ncd}
NCD has been demonstrated to be a \textit{universal} measure of similarity between two objects~\cite{ncd}---where a value of 0 denotes equivalence and a value of 1 denotes complete dissimilarity.
The NCD is defined as~\cite{ncd}
\begin{equation}
    \text{NCD}(x, x') = \frac{|\mathcal{C}(xx')| - \min\{|\mathcal{C}(x)|, |\mathcal{C}(x')|\}}{\max\{|\mathcal{C}(x)|, |\mathcal{C}(x')|\}} + \varepsilon,
    \label{eq:ncd}
\end{equation}
where $|\mathcal{C}(z)|$ is the length of the compressed form of the data, $z$, using compression algorithm, $\mathcal{C}$, the notation $xx'$ denotes the concatenation of strings $x$ and $x'$, and $\varepsilon\geq0$ is an error term accounting for imperfect compression algorithms~\cite{ncd}. The error term is usually assumed to be small relative to the other term.

NCD requires a choice of compression algorithm, and here we evaluated the \texttt{gzip}~\cite{gzip}, \texttt{bz2}~\cite{bz2}, and \texttt{brotli}~\cite{google} compressors.
To distinguish between the use of these different compressors, a subscript is used such that NCD$_{\text{gzip}}$ denotes the use of the \texttt{gzip} compressor.

While NCD is often discussed as a measure of distance~\cite{opitz2023gzip,weinreich2023parameter,nishida2011tweet,jiang2022less,ncd}, it does in fact not adhere to the axioms of metric spaces.
A function, $d:X \times X \rightarrow \mathbb{R}$, associated with a set of points, $X$, where $\mathbb{R}$ denotes the set of real numbers, is said to be a metric if the following four axioms hold for all $x_1, x_2, x_3 \in X$~\cite{metrics}:
\begin{align}
    \text{Zero Axiom:} \quad & d(x_1,x_2) = 0 \iff x_1 = x_2 \label{eq:axiom_zero} \\
    \text{Non-negativity Axiom:} \quad & d(x_1,x_2) \geq 0 \label{eq:axiom_nonnegativity} \\
    \text{Symmetry Axiom:} \quad & d(x_1,x_2) = d(x_2, x_1) \label{eq:axiom_symmetry} \\
    \text{Triangle Inequality:} \quad & d(x_1,x_3) \leq d(x_1,x_2) + d(x_2,x_3) \label{eq:axiom_triangle}.
\end{align}

Much of the literature devoted to NCD has treated it as a proper metric~\cite{opitz2023gzip,weinreich2023parameter,nishida2011tweet,jiang2022less}.
However, as we show now, it is not, which can lead to erroneous classifications or violated assumptions~\cite{kernels}.

\begin{lemma}
    When using \texttt{gzip}, \texttt{bz2}, and \texttt{brotli} compressors, NCD does not adhere to the axioms in Equations~\ref{eq:axiom_zero}--\ref{eq:axiom_triangle}, and is thus not a metric.
    \label{pseudometric}
\end{lemma}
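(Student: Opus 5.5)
Since the statement is a negative one, the plan is to exhibit, for each compressor, explicit counterexamples rather than to argue abstractly. Failing any single axiom already shows NCD is not a metric. To match the wording ``does not adhere to the axioms in Equations~\ref{eq:axiom_zero}--\ref{eq:axiom_triangle}'', I will try to find a violation of each axiom where possible. Throughout I take $\varepsilon = 0$, as the recent literature does. If $\varepsilon>0$, the zero axiom fails trivially, since $\text{NCD}(x,x)\geq\varepsilon>0$. So this choice only makes the counterexamples harder, not easier.

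\textbf{Zero axiom.} Every real compressor produces a header and framing bytes, and LZ77/BWT-style encoders cannot encode the second copy of $x$ in $xx$ at zero cost. Hence $|\mathcal{C}(xx)| > |\mathcal{C}(x)|$ for typical $x$. For example, $x$ can be a short random byte string, where $\texttt{gzip}$ must emit at least one back-reference token for the second copy. It follows that $\text{NCD}(x,x) = (|\mathcal{C}(xx)|-|\mathcal{C}(x)|)/|\mathcal{C}(x)| > 0$, violating Equation~\ref{eq:axiom_zero}. I will compute this for a concrete string for each of \texttt{gzip}, \texttt{bz2} and \texttt{brotli}.

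\textbf{Symmetry.} The compressed length $|\mathcal{C}(xx')|$ depends on the order of concatenation, because the dictionary is built from the prefix. The two denominators agree, so $\text{NCD}(x,x')\neq\text{NCD}(x',x)$ exactly when $|\mathcal{C}(xx')|\neq|\mathcal{C}(x'x)|$. I will pick $x$ as a long repetitive string and $x'$ as a short random one, and then search small random pairs by computer until the lengths differ for each compressor. Block boundaries in \texttt{bz2} and Huffman table choices make such pairs easy to find.

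\textbf{Non-negativity and triangle inequality.} For negativity we need $|\mathcal{C}(xx')| < \min\{|\mathcal{C}(x)|,|\mathcal{C}(x')|\}$. This can happen when the shorter string compresses poorly on its own but well as a continuation. An example is $x'$ being a random-looking string that is also a substring of $x$, or $x' = x$ with a small perturbation that changes the compressor's block decisions. For the triangle inequality I will use the near-zero self-distance deficit together with asymmetric triples. Take $x_1, x_3$ dissimilar and $x_2$ a concatenation sharing content with both. Then $\text{NCD}(x_1,x_2)$ and $\text{NCD}(x_2,x_3)$ are each roughly $1/2$ or less, while $\text{NCD}(x_1,x_3)\approx 1$ plus the compressor overhead can exceed their sum. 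The practical route is a randomized computer search over short strings, recording one witness per compressor and axiom.

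The main obstacle is \textbf{non-negativity}. For reasonable compressors $|\mathcal{C}(xx')|\geq|\mathcal{C}(x)|$ almost always holds, so negative values require exploiting compressor idiosyncrasies, such as \texttt{brotli}'s built-in static dictionary or \texttt{gzip}'s stored-versus-compressed block switch. I would first search for pairs where $x'$ alone triggers a stored (uncompressed) block but $xx'$ does not. If no such witness exists for some compressor, the lemma still holds, since failure of the zero axiom and symmetry suffices to show NCD is not a metric.
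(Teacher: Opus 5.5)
Your strategy, explicit counterexamples for each axiom and each compressor, is exactly the paper's. The proposal, however, stops at the strategy: it contains no witness, only ``I will compute'' and ``search until found''. For an empirical statement about three specific compressor implementations, the witnesses are the entire proof. The paper simply lists twelve of them, e.g.\ $\text{NCD}_{\text{gzip}}(AA,BAA)=0.13\neq 0.04=\text{NCD}_{\text{gzip}}(BAA,AA)$, $\text{NCD}_{\text{gzip}}(AAAA,AAAA)=-0.04$, and $\text{NCD}_{\text{bz2}}(B,G)=0$ with $B\neq G$. That last one uses the converse direction of the zero axiom, which you did not consider. As written, nothing is established, not even your fallback (zero axiom plus symmetry), since that also needs one concrete verified pair per compressor. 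The fallback would in any case only give ``not a metric''. The paper shows every axiom fails for every compressor, and it relies on the lemma for the failure of non-negativity in particular. Your ``main obstacle'' is real but surmountable: $|\mathcal{C}(xx')|<\min\{|\mathcal{C}(x)|,|\mathcal{C}(x')|\}$ occurs for all three compressors on strings of at most ten characters.

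Several of your heuristics would also steer the search wrongly.
\begin{itemize}
\item \emph{The role of $\varepsilon$.} Setting $\varepsilon=0$ does not make the counterexamples harder. Since $\varepsilon$ shifts every value up, $\varepsilon>0$ makes non-negativity violations harder (the $\varepsilon=0$ value must drop below $-\varepsilon$). It also makes triangle violations harder, because the left side gains $\varepsilon$ and the right side $2\varepsilon$. Only symmetry is independent of $\varepsilon$, so it is the robust witness if you want the conclusion for every $\varepsilon\ge 0$. Likewise, $\text{NCD}(x,x)\ge\varepsilon$ is false in general, precisely because $|\mathcal{C}(xx)|<|\mathcal{C}(x)|$ can occur.
\item \emph{Your triangle construction $x_2=x_1x_3$.} This cannot succeed in the idealised model. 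Take $|\mathcal{C}(z)|=h+p(z)$ with fixed overhead $h$, unrelated $x_1,x_3$ with $p(x_1)=p(x_3)=p$, and back-reference cost $\delta\ge 0$. Then $\text{NCD}(x_1,x_3)\approx p/(h+p)$, while $\text{NCD}(x_1,x_2)+\text{NCD}(x_2,x_3)\approx(2p+2\delta)/(h+2p)$, which is never smaller. Overhead pushes $\text{NCD}(x_1,x_3)$ below $1$, not above. Violations come from byte rounding and encoder quirks on very short inputs, which is why the paper's triples are $(AAA,AAAA,A)$ and $(BC,J,AN)$.
\item \emph{A cheap route you missed.} Any negative self-distance yields a triangle violation for free with $x_1=x_2=x_3=x$, since $d(x,x)\le 2d(x,x)$ forces $d(x,x)\ge 0$.
\item \emph{Your substring idea for negativity.} It does not help, because you also need $|\mathcal{C}(xx')|<|\mathcal{C}(x)|$, i.e.\ appending data must shrink the output. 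Having $x'$ occur inside $x$ does nothing toward that. Perturbed near-copies, as in the paper's $\text{NCD}_{\text{bz2}}(AABABAA,BAABAAB)=-0.03$, are the right place to look.
\end{itemize}
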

\begin{proof}

    We show in what follows that NCD fails to adhere to the axioms for metrics by counter-examples.

    \vspace{0.5em}
    \noindent%
    \textit{Zero axiom:} The following counter-examples violate the zero axiom:
    \begin{align*}
        \text{NCD}_{\text{gzip}}(A,A) = 0.05,~&~ \text{NCD}_{\text{bz2}}(B,G) = 0, \\
        &\qquad\quad \text{and~~NCD}_{\text{brotli}}(X,X) = 0.2.
    \end{align*}

    \vspace{0.5em}
    \noindent%
    \textit{Non-negativity axiom:} The following counter-examples violate the non-nega\-tivity axiom:
    $$
        \text{NCD}_{\text{gzip}}(AAAA,AAAA) = -0.04,
    $$
    $$
        \text{NCD}_{\text{bz2}}(AABABAA,BAABAAB) = -0.03,
    $$
    and
    $$
        \text{NCD}_{\text{brotli}}(CCCCBBCCC, CBCCCBBCCC) = -0.08.
    $$

    \vspace{0.5em}
    \noindent%
    \textit{Symmetry axiom:} The following counter-examples violate the symmetry axiom:
    $$
        \text{NCD}_{\text{gzip}}(AA, BAA) = 0.13 \neq \text{NCD}_{\text{gzip}}(BAA, AA) = 0.04,
    $$
    $$
        \text{NCD}_{\text{bz2}}(AA, AAB) = 0.11 \neq \text{NCD}_{\text{bz2}}(AAB, AA) = 0.00,
    $$
    and
    \begin{align*}
        \text{NCD}_{\text{brotli}}(& AAAAAAA, B) = 0.6 \\
        & \qquad\qquad\quad \neq \text{NCD}_{\text{brotli}}(B, AAAAAAA) = 0.7.
    \end{align*}

    \vspace{0.5em}
    \noindent%
    \textit{Triangle Inequality:} The following counter-examples violate the triangle inequality:
    \begin{align*}
        \text{NCD}_{\text{gzip}}(& AAA, A) > \\
        &\quad \text{NCD}_{\text{gzip}}(AAA, AAAA) + \text{NCD}_{\text{gzip}}(AAAA, A),
    \end{align*}
    $$
        \text{NCD}_{\text{bz2}}(BC, AN) > \text{NCD}_{\text{bz2}}(BC, J) + \text{NCD}_{\text{bz2}}(J, AN),
    $$
    and if
    \begin{align*}
        x_1 = CAAAACAA,~&~ x_2 = CAC, \\
        & \qquad \text{and}~~x_3 = CCACCCACCC,
    \end{align*}
    then
    $$
        \text{NCD}_{\text{brotli}}(x_1,x_3) > \text{NCD}_{\text{brotli}}(x_1,x_2) + \text{NCD}_{\text{brotli}}(x_2,x_3).
    $$
\end{proof}

\subsection{Other String Metrics}
\label{other_string_metrics}
To model datasets that comprise strings, several existing measures of distance between strings are routinely used~\cite{levenshtein}.
To evaluate the relative performance of the NCD metric, we compared it to several other common measures of string distance.
\textit{Levenshtein} is the ``edit distance'' or minimum number of single-character edits to transform one string into another~\cite{navarro2001guided}.
\textit{Hamming} is the number of character positions where two strings differ.
\textit{Hamming Ratio} is the number of character positions where two strings differ, divided by the length of the longer string (denoted ``Ratio'' in the figures).

\subsection{Calculating the distance matrix}
\label{distance_matrix}

In what follows, the pairwise distances between two sets of samples (denoted $X$ and $X'$) is collected in a a \textit{distance matrix}, $D$.
When computing the value of NCD$(x,x')$, it is necessary to calculate the values of $\mathcal{C}(x)$ and $\mathcal{C}(x')$.
To classify a sample, Jiang \textit{et al.}~\cite{jiang2022less} iterated over all elements in the sets $X$ and $X'$, where $X$ would be a set of samples with known labels and $X'$ one with unknown labels, such that for each $x_i \in X$ the compression $\mathcal{C}(x_i)$ was computed repeatedly for each $x_j' \in X'$.
Because computational time scales linearly with the size of both $X$ and $X'$, the run time is thus $\mathcal{O}(|X|\cdot|X'|)$ for both compute and memory, where $|X|$ is the cardinality of the set $X$.
Clearly, if the number of samples in $X$ and $X'$ are large, then run-time becomes a concern.
We propose some modifications to how the pairwise distances are computed and which pairwise distances are computed to reduce the computational costs and also make NCD behave ``more like a metric'', which is outlined in Section~\ref{improvements}.

\section{Methods}
\label{methods}

This section outlines Jiang's NCD-KNN method~\cite{jiang2022less} and the proposed modifications to NCD before outlining the data and experiments used to verify the efficacy of said modifications.

\subsection{Proposed Modifications to NCD}
\label{improvements}

Jiang \textit{et al.}'s implementation of the NCD-KNN method~\cite{jiang2022less} becomes inefficient because it includes many repeated computations.
To minimise run-time, we first propose a simple modification to pre-compute the compressed length of all input strings and caching them.
When computing the value of NCD$(x,x')$, it is necessary to calculate the values of $\mathcal{C}(x)$ and $\mathcal{C}(x')$ as in Equation~\ref{eq:ncd}.
For each $x \in X$, the $\mathcal{C}(x)$ would be computed repeatedly when computing the pairwise NCD distances between $x\in X$ and the elements in $X'$.
Instead of recomputing the $\mathcal{C}(x)$ repeatedly, it is much more efficient to pre-compute the compressed versions of each element in $X$ and $X'$ only once.
Since compressing the input samples is the most costly part of this computation, this saves substantial run-time.
In addition, because NCD$(x,x')$ can behave strangely when $x=x'$, we also propose a check for this special case and return 0, thus complying with the zero-axiom.

For the purposes of the experiments below, the method proposed by Jiang \textit{et al.}~\cite{jiang2022less} is denoted ``Vanilla'' and computes the pairwise distances between two sets by naively computing every element of a distance matrix using the NCD function in Equation~\ref{eq:ncd}.
Larger modifications to the ``Vanilla'' method discussed by Jiang \textit{et al.}~\cite{jiang2022less} are proposed in the following subsections.
We propose several modifications that symmetrise the distance matrix, intended to ensure adherence to the symmetry axiom (Equation~\ref{eq:axiom_symmetry}), as discussed in Section~\ref{symmetrisation}.
Finally, we outline the proposed way to use NCD as a kernel in Section~\ref{kernels}.

\subsubsection{Symmetrisation}
\label{symmetrisation}

We propose three new ways to induce ``more'' adherence to the axioms in Equations~\ref{eq:axiom_zero}--\ref{eq:axiom_triangle} and compare their efficacy and run-time in Section~\ref{results}.

The second method, proposed here as a modification to NCD, assumes symmetry by only computing the lower triangular part of the distance matrix and then reflecting those values about the diagonal, instead of computing the entire distance matrix; this method is denoted ``Assumed''.
Hence, in a distance matrix, $D$, the ``Assumed'' method computes the lower triangular part of $D$ and then let
\begin{equation}
    D_{i,j} = D_{j,i},
    \label{eq:assumed}
\end{equation}
which effectively halves the computational cost of computing the distance matrix.

In the third method, proposed here, symmetry is \textit{enforced} by sorting the inputs of NCD alphanumerically before computing the distance between them.
This approach thus ensures symmetry during prediction as well as training.
This method is denoted ``Enforced'', and also effectively halves the computational cost of computing the distance matrix since again $D_{i,j} = D_{j,i}$.

The fourth method, also proposed here, computes the \textit{average} value of $\text{NCD}(x,x')$ and $\text{NCD}(x',x)$.
The average of $\text{NCD}(x,x')$ and $\text{NCD}(x',x)$ is
\begin{equation}
    \label{eq:naive_average_symmetrisation}
    \overline{\text{NCD}}(x,x') = \frac{\text{NCD}(x,x') + \text{NCD}(x', x)}{2},
\end{equation}
which can be simplified to
\begin{equation}
    \label{eq:simplified_average_symmetrisation}
    \overline{\text{NCD}}(x, x') = \frac{\frac{\mathcal{C}(xx') + \mathcal{C}(x'x)}{2} - \min[\mathcal{C}(x), \mathcal{C}(x')]}{\max[\mathcal{C}(x), \mathcal{C}(x')]} + \varepsilon,
\end{equation}
which clearly leads to $D_{i,j}=D_{j,i}$.
The simplification in Equation~\ref{eq:simplified_average_symmetrisation} thus only includes one additional compression and only requires the lower-triangular distance matrix to be computed. Hence, the computational cost is 66.67\% of the cost of the Vanilla method, and not a doubling of the computational cost as Equation~\ref{eq:naive_average_symmetrisation} suggests.
This method is denoted ``Average''.

\subsection{Kernelisation}
\label{kernels}

We propose to use NCD to construct an approximate kernel, allowing NCD to be used with a much larger set of ML methods than as a distance.
For this purpose, a kernel is defined as a function, $k : X \times X \rightarrow \mathbb{R}$, such that
\begin{equation}
    k(x, x') := \langle \phi(x), \phi(x') \rangle
    \label{eq:kernel}
\end{equation}
for all $x, x' \in X$,
where $\phi: X \to Y$ is a feature function (a function extracting features from its inputs), and $\langle \cdot, \cdot \rangle$ denotes an inner product in the feature space, $Y$.
The $i$-th row and $j$-th column of a kernel matrix, $K$, is
$$
    K_{i,j} = k(x_i, x_j).
$$

The radial basis function (RBF) kernel~\cite{shalev2014understanding}, also known as the Gaussian kernel (when the distance is the Euclidean distance) is defined as
\begin{equation}
    k(x, x') = \exp\left(-\frac{d{(x, x')}^2}{\lambda}\right),
    \label{eq:rbf_kernel}
\end{equation}
where $\lambda$ is a tunable parameter (denoted a \textit{length scale}) that controls how quickly the kernel function decreases as a function of the distance between points, \textit{i.e.}, determines the influence of individual points on neighbouring points.
We thus propose to use NCD as the distance function, $d$, in the kernel in Equation~\ref{eq:rbf_kernel}.
The RBF kernel is particularly effective as it is known to be a universal function approximator~\cite{rbf_universal}.

The Hamming kernel~\cite{hamming_kernel}, based on the Hamming distance between two strings or binary vectors, is defined as
\begin{equation}
    k(x, x') = 1 - \lambda \frac{d_H(x,x')}{\max(|x|,|x'|)},
    \label{eq:hamming_kernel}
\end{equation}
where $ d_H(x, x') $ denotes the Hamming distance, $\max(|x|,|x'|)$ denotes the length of the longer string, and $ \lambda $ is a tunable parameter that controls the sensitivity of the kernel to differences between input vectors.
Smaller values of $ \lambda $ cause the kernel to decay more rapidly as the number of differing positions increases, thereby emphasizing exact or near-exact matches.
We propose to use this kernel in settings where inputs are strings or binary representations, as it naturally captures similarity through positional agreement.
Like the RBF kernel, the Hamming kernel belongs to the class of positive-definite kernels and has been shown to be effective at classification tasks~\cite{hamming_classification}.

Euclidean distances can be computed in the feature space by using a kernel. We see that
\begin{align*}
    d(x,x')
        &= \| \phi(x) - \phi(x') \|_2^2 \\
        &= \langle\phi(x) - \phi(x'), \phi(x) - \phi(x') \rangle \\
        &= \langle \phi(x), \phi(x) \rangle + \langle \phi(x'), \phi(x') \rangle - 2\langle \phi(x), \phi(x') \rangle \\
        &= k(x, x) + k(x', x') - 2k(x, x'),
\end{align*}
and denote this distance as the \textit{kernel distance}.
For the RBF and Hamming kernels, when the symmetry and the zero axioms hold, we known that $k(x,x) = k(x',x') = 1$, and the kernel distance can be computed efficiently as
\begin{equation}
    d_k(x, x') = 2 - 2 k(x, x').
    \label{eq:kernel_distance}
\end{equation}
This formulation was used to extend NCD-KNN to kernels, \textit{i.e.}, the kernel distance in Equation~\ref{eq:kernel_distance} was used together with the RBF kernel in Equation~\ref{eq:rbf_kernel}, as well as with the Hamming kernel in Equation~\ref{eq:hamming_kernel}.
Logistic regressors and SVCs were trained on the kernel matrices formed from Equations~\ref{eq:rbf_kernel}~\&~\ref{eq:hamming_kernel}.
The KNN models were trained using a matrix calculated with Equation~\ref{eq:kernel_distance} for both the Hamming and RBF kernels.

\subsection{Data}
\label{datasets}

Several open datasets were used to evaluate the efficacy of NCD in the context of heterogeneous tabular and text data.

We used the KDD-NSL data, which is a log of system process data for both regular users (denoted benign) and malicious software (denoted adversarial)~\cite{kddnsl}. It includes 6,072 samples and 41 features that encapsulate the behaviour of both benign software and malware.
KDD-NSL includes software protocol, system error rate, whether the process has root privileges, and the number of files accessed by the process.

We also used the DDoS IoT dataset~\cite{ddos}, which includes information collected from network packet headers of adversarial and benign users across many types of DDoS attacks.
Specific features include source IP address, source port, destination IP address, destination port, and network protocol among a total of 90 features across more than 40 million samples, collected from both benign users and malicious traffic.
We used the Truthseeker dataset~\cite{truthseeker}, which includes 134 thousand messages from Twitter users with a label provided by the data distributors, and a label that encodes whether or not a given user was a suspected bot.
Finally, we used the SMS Spam dataset~\cite{sms_spam} which includes SMS messages and a label indicating whether or not a message is spam across 5,575 samples.

For several of the datasets, malicious examples were rare compared to the number of benign examples. To address the class imbalances, each dataset was under-sampled~\cite{undersampling} using the \texttt{imblearn} package~\cite{imblearn} to reduce bias towards the majority class and to ensure metrics like accuracy are meaningful.

In an effort to represent both text and numerical data as strings, the rows of each numerical dataset were extracted as lists in \texttt{Python} and then those lists were cast directly to strings for the DDoS, KDD-NSL, and SMS Spam datasets.
This crude conversion was intentionally adopted as a baseline representation, allowing us to evaluate model performance without introducing additional preprocessing heuristics or feature engineering that might confound comparisons across disparate datasets.

\subsection{Experiments}
\label{experiments}
We evaluated the proposed methodology using the described datasets, models, symmetrisation methods (``Vanilla'', ``Assumed'', ``Enforced'', and ``Average''), and metrics ($\text{NCD}_{\text{gzip}}$, $\text{NCD}_{\text{bz2}}$, and $\text{NCD}_{\text{brotli}}$, Levenshtein distance, Hamming distance, and a normalised Hamming distance (labelled ``Ratio'')).
After generating the 5-fold cross-validation sets for each dataset-metric-symmetrisation combination, the distance matrices for each dataset-metric-symmetrisation combination method were computed and cached, as outlined in Sections~\ref{distance_matrix} and~\ref{improvements}.
Additionally, kernel matrices were computed as described in Section~\ref{kernels}.
The classifiers used were KNN, logistic regression, and SVC, as implemented in \texttt{scikit-learn}~\cite{sklearn}.

Each model was tuned using the hyper-parameters outlined in the following.
Both the RBF and Hamming kernels have a hyper-parameter, $\lambda$, that had to be tuned; $\lambda$ was was evaluated in powers of 10 in the range $[10^{-3}, 10^3]$.
KNN requires the model builder to specify the number of nearest neighbours.
We evaluated $k \in \{1,3,5,7,11\}$, as odd numbers means there were no ties (for the binary classification task).
In logistic regression, an $\ell_2$ penalty was used as well as a configuration without any penalty.
The coefficient of the penalty was set to powers of 10 in the range $[10^{-3}, 10^3]$.
The \texttt{SAGA} solver~\cite{saga} was used for logistic regression, with a tolerance of $10^{-4}$.
The penalty term in the SVC was varied in the range $[10^{-3}, 10^3]$ for each power of ten.

To find the most appropriate set of hyper-parameters, each of the dataset-model-symmetrisation-metric combinations enumerated above were evaluated using a grid search and 5-fold cross-validation on the training data.
However, the accuracies and times reported below are calculated on 200 \textit{test} samples withheld from the training and cross-validation processes for the best-fit model for each dataset-model-symmetrisation-metric combination.

All experiments were run on an Apple M4 Pro with 12 cores rather than a data-center CPU so that timing measurements reflect (an admittedly high-end) client device.

\section{Results and Discussion}
\label{results}

In this section, the results from the aforementioned experiments are discussed.

\begin{figure*}[!htb]
    \centering
    \includegraphics[width=.30\textwidth]{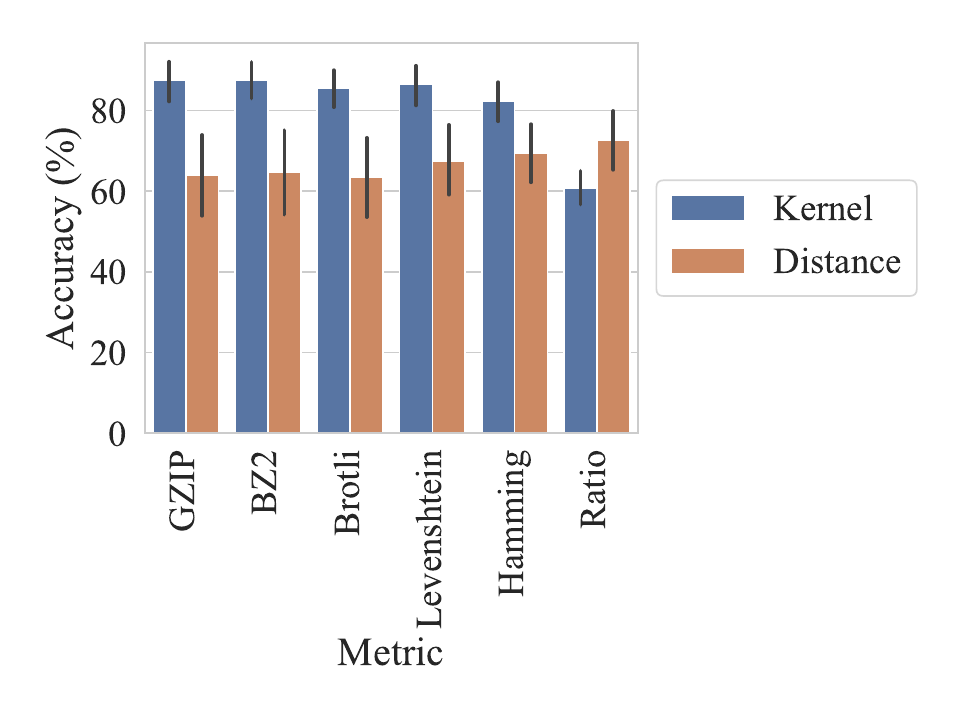}
    \includegraphics[width=.30\textwidth]{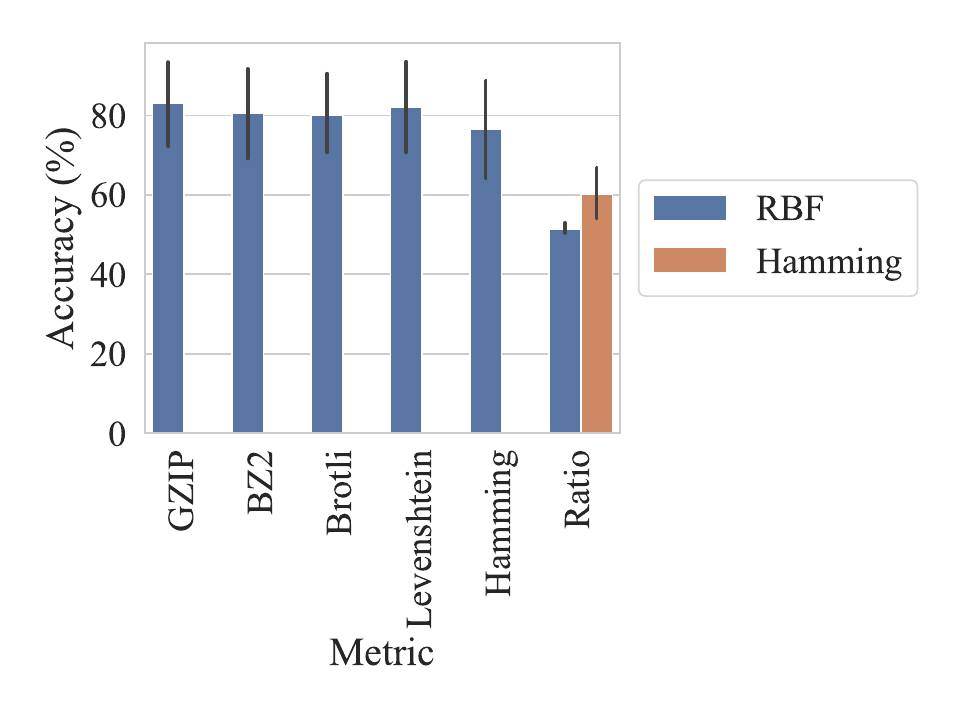}
    \includegraphics[width=0.30\textwidth]{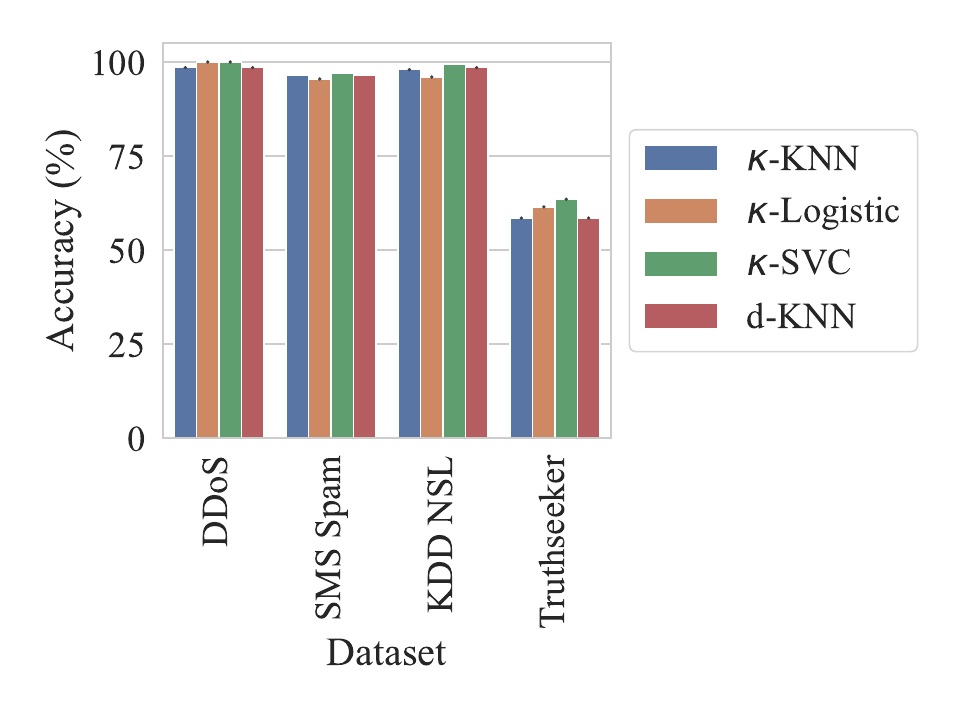}
    \caption{
    This figure depicts the accuracy broken down by string metric (left and centre plots) as well as by dataset and model (right plot).
    The left and centre plots depict the test accuracy for all best-fit dataset-model-symmetrisation-metric combinations with the error bar indicating the 95\% interval.
    The left plot compares the performance between models that use kernel methods (blue) and models that use distance alone (orange, KNN only).
    The centre plot compares the RBF kernel  (blue) to the Hamming kernel (orange)---the latter is only defined for the ``Ratio'' metric.
    In contrast, the right plot depicts test accuracy on \textit{only} the best NCD-kernelised model (denoted with a ``$\kappa$-'' prefix) and the best distance-based KNN proposed by Jiang \textit{et.~al.}~\cite{jiang2022less} (denoted with a ``$d$-'' prefix).
    }
    \label{fig:baseline}
\end{figure*}

Figure~\ref{fig:baseline} depicts the test accuracy across all dataset-metric-model-kernel combinations.
The left and centre subplots of Figure~\ref{fig:baseline} depict the test accuracies for each best-fit dataset-metric-model-kernel-symmetrisation combination, and  the error bars indicate the 95\% confidence interval.
The left subplot of Figure~\ref{fig:baseline} clearly indicates that kernel methods (blue) tend to outperform distance-based methods when using NCD with various compressors (orange).
Furthermore, the left subplot shows that that kernelised NCD is often significantly better than string metrics or their kernelised counterparts.
The centre subplot of the same figure clearly shows that for all metrics besides ``Ratio'' (the only metric for which the Hamming kernel is defined), that the RBF kernel significantly outperforms the baseline Hamming kernel.
Additionally, the centre subplot shows that NCD is significantly more accurate than any of the baseline string metrics since compressors encode more semantic and string frequency data that metrics alone.
The right subplot validates the performance of the proposed method by depicting the test accuracies for the best-fit model using kernelised NCD, clearly indicating that kernelized NCD models are often more accurate than NCD-KNN\@.

\begin{figure*}[!htb]
    \centering
    \includegraphics[width=0.60\textwidth]{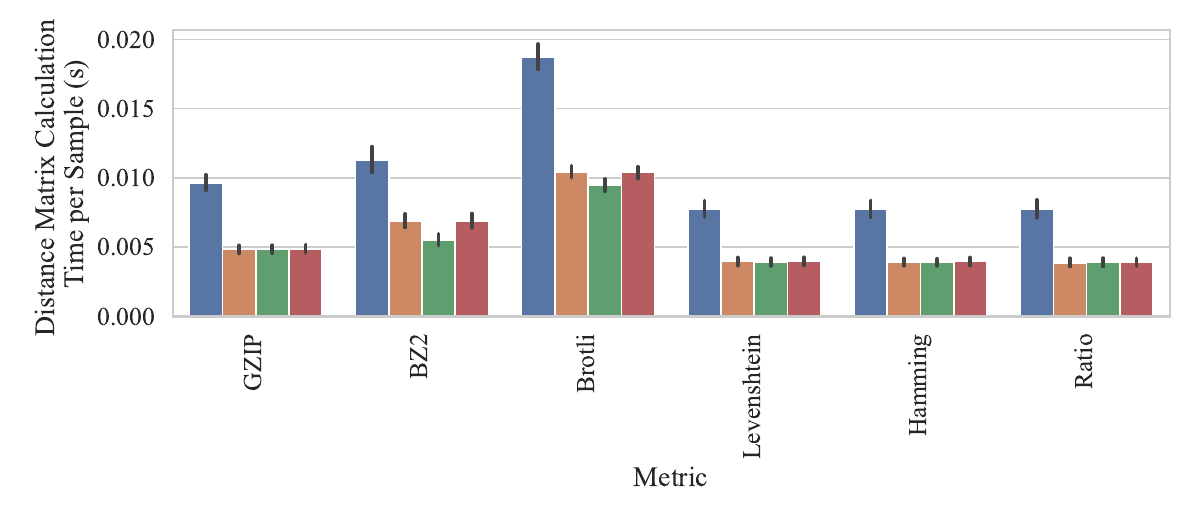}
    \includegraphics[width=0.35\textwidth]{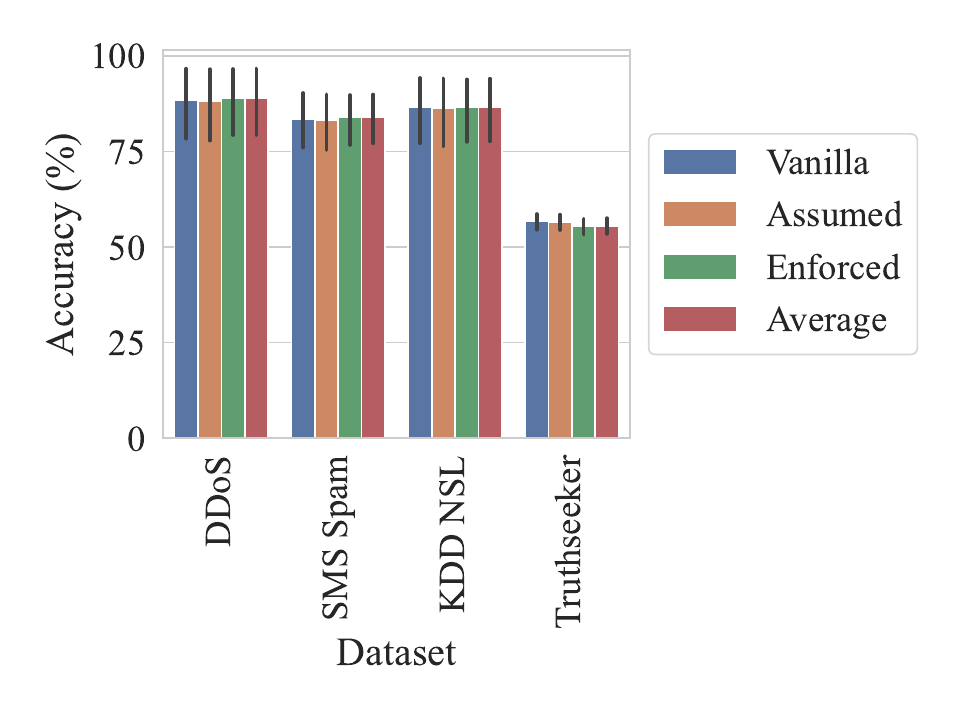}
    \caption{
    The distance calculation time (left) and accuracy (right) across each dataset (columns), distance metric (first-axis, left plot),  dataset (first-axis, right plot), model (left plot, colour), and symmetrisation method (right plot, colour).
    For both plots, the bars represent the mean and the error bars represent 95\% confidence intervals and colour represents the symmetrisation method.
    The left plot was calculated using 1000 training samples for each metric-algorithm combination.
    The right plot was calculated using the test accuracies and 95\% confidence interval for the best fit NCD-kernelised model, calculated for each of the symmetrisation-data-model-compressor combination.
    }
    \label{fig:acc_and_time}
\end{figure*}

Figure~\ref{fig:acc_and_time} shows the classifier performance across each dataset and string metric in terms of run-time (left) and accuracy (right), with the error bar indicating the 95\% performance for all best-fit dataset-metric-model-kernel-symmetrisation combinations.
It is clear from the right subplot of Figure~\ref{fig:acc_and_time} that choice of symmetrisation method has a substantial effect on run-time, with the ``Assumed'', ``Enforced'', and ``Average'' symmetrisation methods taking roughly half as long per sample as the ``Vanilla'' version.
It is clear from the left subplot of Figure~\ref{fig:acc_and_time} that the ``Assumed'', ``Enforced'', and ``Average'' symmetrisation methods proposed in this work are superior to those found in the literature by decreasing the run-time without significantly penalising accuracy (Figure~\ref{fig:acc_and_time}, right).

\section{Limitations}
\label{limitations}
While the methods presented in this work demonstrate strong performance across various tasks and datasets, several limitations should be acknowledged.
To further improve run-time performance, compression algorithms optimised for graphics processing units (GPUs) have been developed~\cite{gpu_compression}.
These are likely to outperform the CPU-based implementations used in this paper when applied to large-scale datasets.
Incorporating such GPU-accelerated compressors could significantly reduce processing time and improve scalability.
This work focused on a specific set of compression algorithms.
Although other compressors exist---including those optimised for specific data types~\cite{mp3,hevc}---they were considered out of scope for this study.
Likewise, the data preprocessing step used for heterogeneous tabular data was intentionally kept simple---each row was cast to a Python list then cast again as a string.
While effective for our purposes (see Figure~\ref{fig:baseline}), this approach is crude and may obscure structural or semantic relationships within the data by including extraneous characters like white spaces, commas, and brackets (used to delineate and denote a list in Python, respectively).
More sophisticated encoding methods—such as schema-aware parsing or embedding techniques—could improve performance, particularly on more complex or high-dimensional datasets, in particular for the ``Truthseeker'' dataset~\cite{arslan2025prediction}.

\section{Conclusion}
\label{conclusion}

Overall, we see that NCD is at least as accurate as other string metrics (left subplot of Figure~\ref{fig:baseline}), despite not being a true metric (Lemma~\ref{pseudometric}).
Furthermore, we see that the proposed symmetrisation methods are quite effective---sometimes even outperforming the ``Vanilla'' method with respect to accuracy(right subplot of Figure~\ref{fig:acc_and_time}).
Additional run-time improvements are significant and obvious (left subplot of Figure~\ref{fig:acc_and_time}).

Furthermore, because NCD is known to work well on a small number of samples~\cite{ncd}, each model can be unique to each user, session, or device by using data from each user in isolation.
The proposed model is a real-time, client-side classification method that can be trained quickly---potentially on data collected from only a single user and stored only on their device.
The proposed client-side classifier limits the attack surface to only adversaries that have access to data that users generally keep private (\textit{e.g.}, the contents of a message, system data, or network traffic).
The end result is a model with a marginal attack surface that is nevertheless accurate, simple, generally applicable, and fast enough to be trained entirely on a generic client device.

\bibliographystyle{elsarticle-num}
\bibliography{bibliography}

\end{document}